\newtheorem{definition}{Definition}[section]
\newtheorem{proposition}[definition]{Proposition}
\theoremstyle{remark}
\newtheorem{remark}{Remark}
\begin{document}

\title{Latent Autoregression via Gaussian-Process Priors\\ in Variational Autoencoders}

\author{
Yves Ruffenach\\
Conservatoire National des Arts et Métiers\\
\texttt{yves.ruffenach.auditeur@lecnam.net}\\
\texttt{yves@ruffenach.net}\\
ORCID: \href{https://orcid.org/0009-0009-4737-0555}{0009-0009-4737-0555}
}

\date{}

\maketitle

\begin{abstract}
We study a class of sequential latent-variable models in which temporal dependence is carried entirely by a Gaussian-process prior on a continuous latent trajectory. Given a finite index set $\{1,\dots,L\}$ and a Gaussian process on $[0,1]$ with covariance kernel $k$, we consider the induced joint Gaussian law of the latent variables $(z_1,\dots,z_L)$ and use its canonical factorization
\[
p(z_{1:L}) \;=\; \prod_{t=1}^L p(z_t \mid z_{<t})
\]
to define a purely latent autoregressive process. In this construction, causality is a probabilistic property of the latent path, obtained by Gaussian conditioning, rather than the result of an explicit recurrence in the observation space. We give an explicit form for the Gaussian conditionals $p(z_t \mid z_{<t})$ and discuss regularity properties of the induced latent process, in particular its covariance structure and its relation to Markov and AR($p$) processes.

We then couple this latent process with a non-autoregressive observation model and derive a variational formulation in the spirit of variational autoencoders: the correlated Gaussian-process prior $p(z_{1:L})$ plays the role of a structured prior on paths, the approximate posterior $q_\phi(z_{1:L}\mid x)$ is amortized by an encoder, and the objective function is a regularized evidence lower bound (ELBO) in which the Kullback-Leibler term measures the deviation from the latent autoregressive prior. We analyse the resulting probabilistic model from the viewpoint of stochastic processes and approximate Bayesian inference, emphasizing the interplay between the Gaussian–process geometry on latent paths and the variational approximation.

As an illustration, we implement this framework with finite-dimensional Gaussian-process priors and a non-autoregressive decoder, and we report numerical results on a standard sequence dataset. These experiments are not the main contribution of the paper: they serve to show that the latent autoregressive scheme induced by the Gaussian-process prior can be trained stably and exploited in practice in a constrained proof-of-concept regime.
\end{abstract}

\bigskip

\noindent\textbf{Keywords.}
Gaussian processes; latent autoregression; variational autoencoders; stochastic processes; sequential generative models; Bayesian inference.

\section{Introduction}

A central theme in modern probability and statistics is the construction of flexible stochastic processes on high-dimensional spaces that remain amenable to inference. In sequential settings, this often takes the form of latent-variable models in which an unobserved process $(Z_t)_{t=1}^L$ carries the temporal dependence, while observations $(X_t)_{t=1}^L$ are obtained through a (possibly nonlinear) emission mechanism. Classical examples include Markov chains, autoregressive (AR) processes, and state-space models.

In this work we study a different type of latent dynamics, in which temporal structure is imposed by a Gaussian process (GP) prior on a \emph{continuous} latent trajectory. More precisely, we consider a finite collection of latent variables $(z_1,\dots,z_L)$ in $\mathbb{R}^{d_z}$ and assume that they arise as evaluations of an underlying Gaussian process at ordered time points $0 < t_1 < \dots < t_L \le 1$. The induced joint law is multivariate Gaussian, so that
\[
(z_1,\dots,z_L) \sim \mathcal{N}(0, K),
\]
for a covariance matrix $K$ determined by the kernel. From a probabilistic point of view, this construction is classical.

Our contribution is to make explicit and exploit the \emph{causal factorization} of this joint distribution. By writing
\[
p(z_{1:L}) = \prod_{t=1}^L p(z_t \mid z_{<t}),
\]
we obtain a purely latent autoregressive process in which causality is induced by Gaussian conditioning. Unlike Markov models, the conditional law of $z_t$ depends on the entire history $z_{<t}$, and unlike classical autoregressive models on observations, the temporal structure is entirely confined to the latent space. This leads to a family of GP-based latent autoregressive processes governed by the covariance structure of the Gaussian process.

We then couple this latent process with a non-autoregressive observation model, in the spirit of variational autoencoders (VAEs). The Gaussian-process prior plays the role of a correlated prior on latent paths, while an amortized encoder yields an approximate posterior. This gives rise to a variational objective in which the Kullback-Leibler term measures the deviation from the latent autoregressive Gaussian prior. The resulting model can be viewed as a probabilistic alternative to autoregressive sequence models, where temporal dependence is expressed analytically at the latent level and the decoder is fully parallel.

The main contributions of this paper are as follows:
\begin{itemize}
  \item we formalize a class of latent autoregressive processes obtained by causal factorization of a Gaussian-process prior on a finite grid;
  \item we derive explicit Gaussian conditional distributions $p(z_t \mid z_{<t})$ and discuss basic properties (existence, covariance structure, relation to Markov and AR($p$) processes);
  \item we embed this latent process in a variational framework and obtain an ELBO-type objective in which the KL term is computed against the correlated prior;
  \item we illustrate, on a standard sequence dataset, that this construction can be implemented in practice and trained stably within a constrained proof-of-concept setup.
\end{itemize}

\section{Related Work}

\subsection{Variational Inference and Latent Variable Models}

Variational methods have become a standard tool for approximate Bayesian
inference in latent-variable models. In its simplest form, a variational
approximation replaces an intractable posterior distribution
$p_{\theta}(z \mid x)$ by a tractable distribution $q_{\phi}(z \mid x)$
minimizing the Kullback-Leibler divergence. The resulting objective,
often referred to as the evidence lower bound (ELBO), reads
\[
\mathcal{L}(x)
=
\mathbb{E}_{q_{\phi}(z \mid x)}[\log p_{\theta}(x \mid z)]
-
D_{\mathrm{KL}}\!\left(q_{\phi}(z \mid x)\,\|\, p_{\theta}(z)\right),
\]
and provides a tractable surrogate for the marginal likelihood
$\log p_{\theta}(x)$.

Within this general framework, variational autoencoders (VAEs)
\cite{Kingma2014VAE} constitute a class of latent-variable models in
which the approximate posterior $q_{\phi}$ and the likelihood model
$p_{\theta}$ are parameterized by neural networks. Many structural
variants exist: scaled Kullback-Leibler regularization
($\beta$-VAE~\cite{Higgins2017BetaVAE}), hierarchical priors
\cite{Sonderby2016LadderVAE}, and sequential latent structures
\cite{Fraccaro2016Stochastic,Karl2017DVBF}. From a probabilistic
perspective, these models can be viewed as flexible non-linear
state-space models, in which variational inference plays the role of an
amortized filtering or smoothing procedure.

A key theme in this literature is the construction of latent spaces that
retain meaningful global structure beyond what is directly observable.
This aspect is central to few-shot and one-shot generalization
\cite{Rezende2016OneShot}, where latent regularity plays the same role
as prior smoothness in classical Bayesian non-parametrics.

\subsection{Gaussian Processes and Correlated Latent Priors}

Gaussian processes (GPs) \cite{Rasmussen2006GPML} provide a canonical
class of stochastic processes indexed by continuous time or space, and
form the basis for a long tradition in Bayesian non-parametrics.
Given a mean function $m$ and a covariance kernel $k$, a GP satisfies
\[
f(t) \sim \mathcal{GP}\!\bigl(m(t), k(t,t')\bigr),
\]
and any finite collection $(f(t_1),\dots,f(t_L))$ is jointly Gaussian
with covariance matrix $K=(k(t_i,t_j))_{i,j}$.

When GPs are used as priors for latent trajectories, their covariance
structure can regularize or constrain the evolution of the latent
variables. This idea has been studied under various forms in the machine
learning literature. In GP-VAEs
\cite{Casale2018GPPVAE,Fortuin2020GPVAE,Pearce2020BAGP}, the latent
variables of a VAE are endowed with a correlated Gaussian prior
$z \sim \mathcal{N}(0, K)$, which induces a form of temporal or spatial
continuity. Such models produce smooth latent paths rather than
independent latent points, but the induced correlations are symmetric
and do not encode causal direction. Consequently, these approaches yield
latent continuity but not latent autoregression in the sense of a
factorization
$p(z_{1:L}) = \prod_{t=1}^L p(z_t \mid z_{<t})$.

\subsection{Latent Dependence, Conditioning, and One-Shot Behavior}

Several works have emphasized that latent correlations can substantially
improve generalization when data are scarce. Rezende et
al.~\cite{Rezende2016OneShot} demonstrated that correlated latent priors
combined with variational inference can produce coherent samples from
single examples. More recent work, such as dense GP layers
\cite{Wang2021DenseGP}, shows that embedding GP-like structures into
deep networks enhances contextual coherence.

The ability of Gaussian processes to provide analytically tractable
conditional distributions plays a central role in these constructions.
For a GP evaluated at $\{t_1,\dots,t_L\}$, the conditional distribution
of $z_t$ given $z_{<t}$ is Gaussian:
\[
p(z_t \mid z_{<t})
=
\mathcal{N}\!\left(
k_{12}^\top K_{11}^{-1} z_{<t},
K_{22}-k_{12}^\top K_{11}^{-1} k_{12}
\right),
\]
where $K_{11}$ is the covariance of $z_{<t}$ and $k_{12}$ their
cross-covariance with $z_t$. This conditional representation is a
classical result of multivariate Gaussian theory, and forms the
probabilistic foundation of the latent autoregressive model developed in
the present work.

\subsection{Sequential Latent Models and the Lack of Full Causality}

Sequential latent-variable models are well documented. Stochastic
recurrent VAEs \cite{Fraccaro2016Stochastic}, deep Kalman-like models
\cite{Karl2017DVBF}, and continuous-time latent SDE/SSM models
\cite{Zhou2023DeepLatentSSM,Klushyn2021LatentMatters} provide
increasingly expressive families of latent processes, typically of the
form
\[
p(z_{1:L})=\prod_{t=1}^L p(z_t \mid z_{t-1}),
\]
or more elaborate Markovian structures. However, these models rely on
parametric state transitions whose temporal dependence is learned
through neural weights. They do not exploit the analytic conditional
structure of Gaussian processes, nor do they provide a full causal
factorization with dependence on the entire latent history.

Inverse autoregressive flows \cite{Kingma2016IAF} manipulate latent
densities using autoregressive maps, but the induced dependence is
parametric and does not correspond to a causal stochastic dynamic.
Similarly, existing GP-based approaches impose correlations but not
direction.

To our knowledge, no prior work combines:
\begin{itemize}
    \item a correlated Gaussian-process prior on a latent trajectory,
    \item a fully causal factorization $p(z_{1:L})=\prod_{t=1}^L p(z_t\mid z_{<t})$ obtained analytically from GP conditioning,
    \item and a non-autoregressive likelihood model.
\end{itemize}
This combination produces a latent autoregressive process in which
causal structure arises from probabilistic geometry rather than learned
transition weights.

\subsection{Computational Considerations}

Inference with Gaussian processes is limited by the cost of inverting
covariance matrices, nominally $O(L^3)$. Several scalable numerical
methods have been proposed, such as inducing-point approximations
\cite{Titsias2009InducingPoints} and matrix-free linear solvers
implemented via structured or stochastic approximations. In practice, we
make use of the BBMM method of \cite{Gardner2018GPyTorch}, which permits
quasi-quadratic GP inference in $O(L^2)$ operations and enables
joint optimization of kernel hyperparameters with variational objectives.

The kernel choice (RBF, Matérn \cite{Guttorp2006Matern}, or spectral
mixture \cite{Wilson2013SpectralMixture}) determines the regularity of
the latent process, in the classical sense of sample-path smoothness,
and influences the conditional means and variances appearing in the
latent autoregressive factorization.

\subsection{Summary}

The present work lies at the intersection of three bodies of literature:
\begin{itemize}
    \item variational inference and latent-variable models,
    \item Gaussian-process priors on latent trajectories,
    \item sequential latent models and nonparametric stochastic processes.
\end{itemize}
While prior approaches have used Gaussian processes to induce latent
correlations, they have not exploited the analytic causal factorization
available for finite GP evaluations. The construction developed here
extends this direction by placing temporal causality in the latent
space, yielding a latent autoregressive process governed entirely by the
covariance structure of the Gaussian process. We view this as a
probabilistic alternative to learned state transitions, and as a
conceptual step toward generative models in which sequential structure
is expressed analytically rather than parametrically.

\section{Mathematical Preliminaries}
\label{sec:math-preliminaries}

We briefly recall the basic Gaussian-process (GP) and matrix-analytic
facts used throughout the paper. The goal is to make the probabilistic
assumptions and the linear-algebraic structure fully explicit.

In the theoretical developments we use L for the sequence length, while in the complexity and experimental sections we write T. In the discrete setting considered here, these play the same role.

\subsection{Gaussian Processes on a Finite Grid}

Let $(\Omega,\mathcal{F},\mathbb{P})$ be a probability space and
consider a centred Gaussian process
\[
f \sim \mathcal{GP}(0,k)
\]
indexed by $[0,1]$, where $k : [0,1]\times[0,1] \to \mathbb{R}$ is a
positive-definite kernel.

\begin{itemize}
  \item[(H1)] \textbf{Kernel regularity.}
  The kernel $k$ is continuous on $[0,1]^2$ and positive definite in
  the sense that, for any finite family of pairwise distinct points
  $0<t_1<\dots<t_L\le 1$, the Gram matrix
  \[
  K_{tt} \;=\; \bigl(k(t_i,t_j)\bigr)_{1\le i,j\le L}
  \]
  is symmetric positive definite.

  \item[(H2)] \textbf{Latent dimensionality.}
  The latent dimension $d_z \in \mathbb{N}$ is fixed, while the
  sequence length $L$ may vary. We write
  \[
  z_t \in \mathbb{R}^{d_z}, \qquad
  z_{1:L} = (z_1,\dots,z_L) \in \mathbb{R}^{L d_z}.
  \]
\end{itemize}

Under (H1), for any choice of $0<t_1<\dots<t_L\le 1$ we can define the
finite-dimensional Gaussian vector
\[
z_{1:L}
=
\bigl(f(t_1),\dots,f(t_L)\bigr),
\]
which is distributed according to
\[
z_{1:L}
\sim
\mathcal{N}\bigl(0, K_{tt}\bigr).
\]

In the latent-variable model considered in this work, we use a
$d_z$-dimensional GP, implemented as $d_z$ i.i.d.\ copies of $f$.
This yields the block-structured covariance
\[
\mathrm{Cov}(z_{1:L})
=
K_{tt} \otimes I_{d_z},
\]
where $I_{d_z}$ denotes the $d_z\times d_z$ identity matrix and
$\otimes$ is the Kronecker product. The Kronecker structure will be
used explicitly in the computation of KL divergences and log-densities.

\subsection{Gaussian Conditioning}

We recall the classical conditioning formulas for multivariate
Gaussians in the notation used later for the causal factorization.

Let $(Y_1,Y_2)$ be a centred Gaussian vector with
\[
\begin{pmatrix} Y_1 \\[1ex] Y_2 \end{pmatrix}
\sim
\mathcal{N}\!\left(
0,\;
\begin{bmatrix}
\Sigma_{11} & \Sigma_{12} \\
\Sigma_{21} & \Sigma_{22}
\end{bmatrix}
\right),
\]
where $\Sigma_{11}$ is invertible. Then the conditional distribution of
$Y_2$ given $Y_1=y_1$ is Gaussian with
\begin{align}
\mathbb{E}\bigl[Y_2 \mid Y_1=y_1\bigr]
  &= \Sigma_{21}\Sigma_{11}^{-1} y_1,
  \label{eq:gaussian-cond-mean}
  \\
\mathrm{Cov}\bigl(Y_2 \mid Y_1=y_1\bigr)
  &= \Sigma_{22} - \Sigma_{21}\Sigma_{11}^{-1}\Sigma_{12}.
  \label{eq:gaussian-cond-cov}
\end{align}

Applied to the GP evaluation vector $z_{1:L}$, this yields the
sequential conditioning formulas used to construct the latent
autoregressive factorization. More precisely, for each $t\in\{2,\dots,L\}$
we partition
\[
z_{1:L}
=
(z_{<t}, z_t),
\qquad
z_{<t} = (z_1,\dots,z_{t-1}),
\]
and the covariance matrix accordingly as
\[
\Sigma_{11} = K_{(<t,<t)}\otimes I_{d_z},\quad
\Sigma_{12} = k_{(<t,t)}\otimes I_{d_z},\quad
\Sigma_{22} = k_{(t,t)}\otimes I_{d_z},
\]
where $K_{(<t,<t)}$ is the Gram matrix restricted to indices
$\{1,\dots,t-1\}$ and $k_{(<t,t)}$ is the column vector of
cross-covariances between $t$ and $\{1,\dots,t-1\}$.

Using \eqref{eq:gaussian-cond-mean}–\eqref{eq:gaussian-cond-cov} we
obtain conditional means $\mu_t$ and covariances $\Sigma_t$ as
\[
p(z_t \mid z_{<t})
=
\mathcal{N}\bigl(\mu_t(z_{<t}),\Sigma_t\bigr),
\]
with explicit formulas given in Section~\ref{sec:latent-ar-process}.

\paragraph{Notation.}
We write $k_{(t,<t)} \in \mathbb{R}^{t-1}$ for the vector of
cross-covariances between $t$ and $\{1,\dots,t-1\}$.  
The alternative notation $k_{(<t,t)}$ refers to the same object (transpose
conventions only).

\subsection{Kronecker Products and Tensor Structures}

We recall the basic identities used throughout the paper for Kronecker
products. For matrices $A \in \mathbb{R}^{m\times m}$ and
$B \in \mathbb{R}^{n\times n}$,
\[
A \otimes B \in \mathbb{R}^{mn\times mn}
\]
is defined blockwise by
\[
A \otimes B
=
\begin{bmatrix}
a_{11} B & \cdots & a_{1m} B\\
\vdots   & \ddots & \vdots  \\
a_{m1} B & \cdots & a_{mm} B
\end{bmatrix}.
\]

In our setting:
\begin{itemize}
  \item $K_{tt} \in \mathbb{R}^{L\times L}$ encodes temporal correlations;
  \item $I_{d_z} \in \mathbb{R}^{d_z\times d_z}$ acts on the latent
        coordinates;
  \item the joint covariance of $z_{1:L} \in \mathbb{R}^{Ld_z}$ is
        $K_{tt} \otimes I_{d_z}$.
\end{itemize}

We use the standard properties:
\begin{align*}
\det(K_{tt} \otimes I_{d_z})
  &= \det(K_{tt})^{d_z},\\
(K_{tt} \otimes I_{d_z})^{-1}
  &= K_{tt}^{-1} \otimes I_{d_z},
\end{align*}
whenever $K_{tt}$ is invertible.
These identities are used implicitly in the computation of Gaussian
log-densities and KL divergences between diagonal posteriors and the
GP prior.

\paragraph{Standing assumptions.}
Assumptions (H1)–(H4) will be referred to throughout Sections 3 and 4.
Unless otherwise stated, all probabilistic statements are made under
hypotheses (H1)--(H3) of
Sections~\ref{sec:math-preliminaries} and~\ref{subsec:complexity-gp}.
In particular:
\begin{itemize}
  \item the kernel $k$ is continuous and positive definite on $[0,1]^2$;
  \item the latent dimension $d_z$ is fixed while the sequence length
        $L$ may vary (within the regime $L\le T_{\max}$ used in practice);
  \item all Gram matrices $K_{tt}$ arising from such sequences are
        numerically well conditioned, with a uniform lower bound on
        their smallest eigenvalue, enforced by a jitter term
        $\varepsilon I$.
\end{itemize}
Assumption (H4) is purely numerical and is only invoked when discussing
the BBMM complexity estimates in Section~\ref{subsec:complexity-gp}.
Under these conditions, the causal Gaussian factorization in
Proposition~\ref{prop:causal-factorization} is well defined and the
complexity considerations of Section~\ref{subsec:complexity-gp} apply.

\section{Probabilistic Framework and Methodology}

\subsection{Complexity of Gaussian-Process Latent Dynamics}
\label{subsec:complexity-gp}

We briefly summarize the computational complexity associated with the
latent Gaussian-process prior and state the numerical assumptions
under which the proposed scheme is implemented.

\paragraph{Exact complexity.}
For a sequence of length $T$ and a kernel $k$ evaluated on
$0<t_1<\dots<t_T\le 1$, the temporal covariance is
$K_{tt} \in \mathbb{R}^{T\times T}$. Exact GP computations (log-density
and sampling) require a Cholesky factorization
\[
K_{tt} = L L^\top,
\]
with cost $O(T^3)$ operations and $O(T^2)$ memory. In the latent model
considered here, the latent dimension $d_z$ is fixed and the covariance
of $z_{1:T}$ is $K_{tt}\otimes I_{d_z}$, so that the dominant cost still
comes from the $T\times T$ part; Kronecker structure removes only the
dependence on $d_z$ in the factorization.

\paragraph{Approximate complexity.}
To avoid the cubic dependence on $T$, we rely on matrix-free GP
inference, in particular BBMM-style methods based on conjugate
gradients (CG) and stochastic trace estimation
\cite{Gardner2018GPyTorch}. In this regime:

\begin{itemize}
  \item the cost of a single CG iteration for solving
        $K_{tt} v = b$ is $O(T^2 d_z)$, assuming dense kernels and no
        additional structure;
  \item the total cost is therefore
        $O\bigl(T^2 d_z\, n_{\mathrm{iter}}\bigr)$, where
        $n_{\mathrm{iter}}$ is the number of CG iterations required to
        reach a prescribed tolerance $\varepsilon_{\mathrm{CG}}$;
\item classical CG theory implies that, for fixed
$\varepsilon_{\mathrm{CG}}$, one typically has
\[
n_{\mathrm{iter}}
= O\!\bigl(\sqrt{\kappa(K_{tt})}\,
\log(1/\varepsilon_{\mathrm{CG}})\bigr),
\]
where $\kappa(K_{tt})$ denotes the condition number of $K_{tt}$.

\end{itemize}

Under the spectral bound (H3) and a fixed tolerance
$\varepsilon_{\mathrm{CG}}$, the quantity $n_{\mathrm{iter}}$ remains
bounded over the finite set of covariance matrices considered, so that
the effective scaling observed in our experiments is close to
$O(T^2 d_z)$.

\paragraph{Numerical assumptions.}
The next two assumptions are purely numerical: they are only used to
justify the complexity claims for BBMM-style GP inference and play no
role in the measure-theoretic existence and uniqueness results of
Section~\ref{sec:math-preliminaries}.

\begin{itemize}
  \item[(H3)] \textbf{Uniform spectral bounds on the experimental grid.}
  There exists $T_{\max}\in\mathbb{N}$ and constants
  $0 < \underline{\Lambda} \le \overline{\Lambda} < \infty$ such that,
  for all sequence lengths $T\le T_{\max}$ considered in our experiments
  and all kernel hyperparameters of interest, the eigenvalues of $K_{tt}$
  satisfy
  \[
  \underline{\Lambda} \le \Lambda_{\min}(K_{tt})
  \le \Lambda_{\max}(K_{tt}) \le \overline{\Lambda}.
  \]
  In practice, we add a small jitter term $\varepsilon I$ to $K_{tt}$,
  which enforces $\Lambda_{\min}(K_{tt}) \ge \varepsilon > 0$ on the
  finite set of covariance matrices actually used.

  \item[(H4)] \textbf{Controlled CG tolerance.}
  CG iterations in BBMM are stopped when the residual norm falls below
  a fixed tolerance $\varepsilon_{\mathrm{CG}}>0$. We assume that this
  tolerance is chosen such that the resulting approximation of
  $K_{tt}^{-1}$ is sufficiently accurate for the purposes of computing
  KL divergences and ELBO estimates; no probabilistic statement in this
  paper depends on the exact value of $\varepsilon_{\mathrm{CG}}$.
\end{itemize}

\paragraph{Latent vs.\ observation-level complexity.}
The key reduction in complexity comes from the fact that temporal
dependence is modeled on a latent sequence $z_{1:L}$ with $L\ll N$ and
$d_z\ll d$, rather than on the full observation sequence $x_{1:N}$.
Under the assumptions above, the GP-related cost scales as
$O(L^2 d_z)$, to be compared with $O(N^2 d)$ for a typical
self-attention layer on the observation tokens.

In other words, the computational gain is not purely due to numerical
approximations of GP inference, but also to a change in representation:
temporal dependence is encoded once in the covariance structure of the
latent process, rather than recomputed layer by layer in the observation
space.

\subsection{Latent Architecture and Observation Model}

The model architecture is organized around three components: an
amortized encoder, a correlated latent process, and an observation
model. This separation is typical of variational formulations and is
particularly important here, as it isolates the temporal dynamics within
the latent space.

\paragraph{Encoder.}
The encoder maps an observed sequence $x_{1:N}$ to a family of
approximate posteriors over latent trajectories. We write
\[
q_{\phi}(z_{1:L} \mid x_{1:N}),
\]
and in the present work take $q_{\phi}$ to be Gaussian, with means and
covariances parameterized by a temporal network (for instance a causal
convolutional architecture). The precise parametric form of the encoder
is not essential for the probabilistic formulation; its role is to
provide an amortized approximation to the intractable posterior
$p_{\theta}(z_{1:L} \mid x_{1:N})$.

\paragraph{Latent process.}
The core of the model is a Gaussian-process prior on the latent
sequence $z_{1:L}$ endowed with a causal factorization. Formally, we
assume that $(z_1,\dots,z_L)$ arises from a GP evaluated at ordered time
points, and we use the induced Gaussian conditionals to define
\[
p_{\theta}(z_{1:L})
=
\prod_{t=1}^{L} p_{\theta}(z_t \mid z_{<t}),
\qquad
p_{\theta}(z_t \mid z_{<t}) = \mathcal{N}(m_t, \Sigma_t),
\]
where $(m_t,\Sigma_t)$ are the predictive mean and covariance obtained
from the Gaussian conditioning formulas. The covariance kernel
$k_{\psi}(t,t')$ determines the structure of dependence: for instance, a
squared-exponential or Matérn kernel \cite{Guttorp2006Matern} enforces
temporal continuity, while spectral kernels
\cite{Wilson2013SpectralMixture} can encode approximate periodicities.

\paragraph{Observation model (decoder).}
Given a latent trajectory $z_{1:L}$, observations are generated by an
observation model
\[
p_{\theta}(x_{1:N} \mid z_{1:L})
=
\prod_{n=1}^{N} p_{\theta}(x_n \mid z_{1:L}),
\]
which is conditionally independent across positions given the entire
latent sequence. This structure permits fully parallel generation in the
observation space: all $x_n$ are sampled simultaneously once $z_{1:L}$
is known.

The functional form of $p_{\theta}(x_n \mid z_{1:L})$ can be chosen to
match the nature of the data: Gaussian likelihoods for continuous
signals, categorical likelihoods for discrete symbols, etc. From the
probabilistic standpoint, the decoder is simply a family of conditional
distributions indexed by the latent path; its parametrization (via
neural networks or otherwise) is orthogonal to the latent autoregressive
construction.

This separation highlights an important conceptual distinction between
classical autoregressive models and the present latent-variable
approach. In a standard autoregressive model one specifies
\[
p(x_{1:N}) = \prod_{t=1}^{N} p(x_t \mid x_{<t}),
\]
whereas in the GP-based latent framework one works with
\[
p(x_{1:N}) = \int p_{\theta}(x_{1:N} \mid z_{1:L})\, p_{\theta}(z_{1:L}) \,\mathrm{d}z_{1:L},
\]
so that temporal dependence is encoded in $p_{\theta}(z_{1:L})$ rather
than directly in the observation conditionals.

\subsection{Purely Latent Autoregression}
\label{sec:latent-ar-process}

\begin{proposition}[Causal Gaussian Factorization]
\label{prop:causal-factorization}
Assume \textnormal{(H1)}-\textnormal{(H2)} hold and let
$0<t_1<\dots<t_L\le 1$ be fixed. Consider the centred Gaussian vector
\[
z_{1:L}
\sim
\mathcal{N}\bigl(0, K_{tt} \otimes I_{d_z}\bigr).
\]
Then:

\begin{enumerate}
  \item For each $t\in\{1,\dots,L\}$ there exists a unique Gaussian
  conditional distribution
  \[
  p(z_t \mid z_{<t})
  =
  \mathcal{N}\bigl(\mu_t(z_{<t}),\Sigma_t\bigr),
  \]
  where $\mu_t$ is affine in $z_{<t}$ and $\Sigma_t$ is a symmetric
  positive-definite matrix in $\mathbb{R}^{d_z\times d_z}$.

  \item The joint law admits the unique causal factorization
  \begin{equation}
  p(z_{1:L})
  =
  \prod_{t=1}^{L} p(z_t \mid z_{<t}),
  \label{eq:gp-causal-factorization}
  \end{equation}
  where, for $t\ge 2$, the mean and covariance are given by the
  Gaussian-conditioning formulas
  \begin{align}
  \mu_t(z_{<t})
    &= \bigl(k_{(t,<t)}^\top K_{(<t,<t)}^{-1}\bigr) \otimes I_{d_z}\; z_{<t},
    \label{eq:mu-t}
    \\
  \Sigma_t
    &= \bigl(k_{(t,t)} - k_{(t,<t)}^\top K_{(<t,<t)}^{-1}k_{(t,<t)}\bigr)
       \, I_{d_z},
    \label{eq:sigma-t}
  \end{align}
  with the convention that $p(z_1)=\mathcal{N}(0,k(t_1,t_1)I_{d_z})$.
\end{enumerate}
\end{proposition}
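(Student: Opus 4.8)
The plan is to reduce the entire statement to the Gaussian-conditioning identities \eqref{eq:gaussian-cond-mean}--\eqref{eq:gaussian-cond-cov} recalled in Section~\ref{sec:math-preliminaries}, together with the Kronecker identities for $K_{tt}\otimes I_{d_z}$ and the elementary chain rule of probability. First I would fix $t\in\{2,\dots,L\}$, partition $z_{1:L}=(z_{<t},z_t)$, and identify the covariance blocks as $\Sigma_{11}=K_{(<t,<t)}\otimes I_{d_z}$, $\Sigma_{21}=k_{(t,<t)}^\top\otimes I_{d_z}$ and $\Sigma_{22}=k_{(t,t)}\,I_{d_z}$. Since $K_{(<t,<t)}$ is a principal submatrix of the symmetric positive-definite matrix $K_{tt}$ (by (H1)), it is itself positive definite, hence invertible; consequently $\Sigma_{11}$ is invertible with $\Sigma_{11}^{-1}=K_{(<t,<t)}^{-1}\otimes I_{d_z}$, so the conditioning formulas apply.

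Next I would substitute these blocks into \eqref{eq:gaussian-cond-mean}--\eqref{eq:gaussian-cond-cov} and collapse the Kronecker products using the mixed-product property $(A\otimes B)(C\otimes D)=(AC)\otimes(BD)$. For the mean this gives $\Sigma_{21}\Sigma_{11}^{-1}=\bigl(k_{(t,<t)}^\top K_{(<t,<t)}^{-1}\bigr)\otimes I_{d_z}$, hence \eqref{eq:mu-t}, and affinity of $\mu_t$ in $z_{<t}$ is then manifest. For the covariance, the correction term $\Sigma_{21}\Sigma_{11}^{-1}\Sigma_{12}$ reduces to the \emph{scalar} $k_{(t,<t)}^\top K_{(<t,<t)}^{-1}k_{(t,<t)}$ times $I_{d_z}$, yielding $\Sigma_t=\bigl(k_{(t,t)}-k_{(t,<t)}^\top K_{(<t,<t)}^{-1}k_{(t,<t)}\bigr)I_{d_z}$, which is \eqref{eq:sigma-t}. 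The base case $t=1$ is simply the marginal $z_1\sim\mathcal{N}(0,k(t_1,t_1)I_{d_z})$, nondegenerate because $k(t_1,t_1)>0$ as a diagonal entry of an SPD matrix.

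The one point requiring genuine care — and the step I expect to be the main obstacle — is establishing that $\Sigma_t$ is \emph{strictly} positive definite rather than merely nonnegative, since this is what guarantees a nondegenerate conditional and hence a bona fide factorization into densities. Here I would observe that the scalar $s_t:=k_{(t,t)}-k_{(t,<t)}^\top K_{(<t,<t)}^{-1}k_{(t,<t)}$ is exactly the Schur complement of $K_{(<t,<t)}$ in the principal submatrix $K_{(\le t,\le t)}$. Because $K_{(\le t,\le t)}$ is symmetric positive definite (again by (H1), applied to the point set $\{t_1,\dots,t_t\}$) and $K_{(<t,<t)}$ is its invertible leading block, the Schur complement is strictly positive; thus $s_t>0$ and $\Sigma_t=s_t I_{d_z}$ is positive definite. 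This is the only place where the full strength of positive definiteness in (H1) is used.

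Finally I would assemble part~(2). The factorization \eqref{eq:gp-causal-factorization} is the ordinary chain rule applied to the joint density, which is everywhere positive because $K_{tt}\otimes I_{d_z}$ is invertible; each factor coincides with the Gaussian conditional computed above. Uniqueness of both the individual conditionals and of the factorization follows from the standard fact that regular conditional distributions of an absolutely continuous law are determined almost everywhere, and for a strictly positive Gaussian density they are determined everywhere; any other family of conditionals realizing the same joint must therefore agree with $\mathcal{N}(\mu_t,\Sigma_t)$.
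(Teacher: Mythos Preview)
Your proof is correct and follows essentially the same route as the paper's: partition the covariance into Kronecker blocks, apply the Gaussian-conditioning formulas \eqref{eq:gaussian-cond-mean}--\eqref{eq:gaussian-cond-cov}, and invoke the chain rule for densities together with uniqueness of regular conditional laws. Your explicit Schur-complement argument that $s_t>0$ (hence $\Sigma_t$ is strictly positive definite) is a detail the paper's proof actually leaves implicit, while conversely the paper cites Kallenberg's disintegration theorem more explicitly for existence and uniqueness of the conditionals; otherwise the two arguments coincide.
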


\begin{proof}
By (H1), $K_{tt}$ is symmetric positive definite, hence invertible, so
the finite-dimensional Gaussian measure
$\mathcal{N}(0,K_{tt}\otimes I_{d_z})$ on $\mathbb{R}^{L d_z}$ is
non-degenerate. In particular, for each $t\in\{2,\dots,L\}$ the pair
$(z_{<t},z_t)$ is a centred Gaussian vector taking values in the
Polish space $\mathbb{R}^{(t-1)d_z}\times\mathbb{R}^{d_z}$.

The existence and uniqueness (up to $p(z_{<t})$-null sets) of regular
conditional distributions $p(z_t\mid z_{<t})$ then follows from the
general disintegration theorem for probability measures on Polish
spaces; see for instance Kallenberg~\cite[Theorem~6.3]{Kallenberg2002}.
In the Gaussian case, these regular conditional laws are themselves
Gaussian and are given explicitly by the conditioning formulas
\eqref{eq:gaussian-cond-mean}--\eqref{eq:gaussian-cond-cov}.

For each $t\ge 2$ we partition
\[
z_{1:L}=(z_{<t},z_t),\qquad
z_{<t}\in\mathbb{R}^{(t-1)d_z},\;z_t\in\mathbb{R}^{d_z},
\]
and the covariance as
\[
\Sigma_{11} = K_{(<t,<t)}\otimes I_{d_z},\quad
\Sigma_{12} = k_{(<t,t)}\otimes I_{d_z},\quad
\Sigma_{22} = k_{(t,t)}\otimes I_{d_z},
\]
where $K_{(<t,<t)}\in\mathbb{R}^{(t-1)\times(t-1)}$ and
$k_{(<t,t)}\in\mathbb{R}^{t-1}$. Hence
$\Sigma_{11}\in\mathbb{R}^{(t-1)d_z\times (t-1)d_z}$,
$\Sigma_{12}\in\mathbb{R}^{(t-1)d_z\times d_z}$ and
$\Sigma_{22}\in\mathbb{R}^{d_z\times d_z}$, so that the Kronecker
expressions in \eqref{eq:mu-t}--\eqref{eq:sigma-t} are dimensionally
consistent. Applying
\eqref{eq:gaussian-cond-mean}--\eqref{eq:gaussian-cond-cov} to this
partition yields the claimed formulas for $\mu_t$ and $\Sigma_t$.

Finally, since $p(z_{1:L})$ admits a Lebesgue density on
$\mathbb{R}^{L d_z}$, the factorization
\eqref{eq:gp-causal-factorization} follows from the chain rule for
densities, and its uniqueness from the uniqueness (up to null sets) of
the conditional Gaussian laws $p(z_t\mid z_{<t})$.
\end{proof}

We now formalize the notion of purely latent autoregression alluded to
in the introduction. The setting is that of a latent process
$(z_1,\dots,z_L)$, with $z_t \in \mathbb{R}^{d_z}$, endowed with a
joint distribution that factorizes causally in latent space.

\subsubsection{General principle}

A model exhibits purely latent autoregression if the joint law of
$z_{1:L}$ admits a factorization
\begin{equation}
p(z_{1:L}) = \prod_{t=1}^{L} p(z_t \mid z_{<t}),
\label{eq:latent-ar-factorization}
\end{equation}
where each conditional $p(z_t \mid z_{<t})$ depends only on the past
latents and not on the observations. In the Gaussian-process-based
construction considered here, these conditionals are Gaussian:
\begin{align}
p(z_t \mid z_{<t}) &= \mathcal{N}(\mu_t, \Sigma_t), \\
\mu_t(z_{<t})
  &= \bigl(k_{(t,<t)}^{\top} K_{(<t,<t)}^{-1}\bigr) \otimes I_{d_z}\; z_{<t}, \\
\Sigma_t
  &= \bigl(k_{(t,t)} - k_{(t,<t)}^{\top} K_{(<t,<t)}^{-1} k_{(t,<t)}\bigr) I_{d_z}.
\end{align}

where $K_{(<t,<t)}$ is the covariance of $(z_1,\dots,z_{t-1})$ and
$k_{(t,<t)}$ their cross-covariance with $z_t$. These formulas follow
from classical multivariate Gaussian conditioning and show that the
latent dynamics are governed by the covariance structure of the GP
rather than by parametric transition weights.

In this sense the model can be interpreted as a Bayesian autoregression:
memory of the past is transmitted through correlations in the prior
rather than through explicitly learned recurrence. A trajectory
$z_{1:L}$ is first sampled according to \eqref{eq:latent-ar-factorization},
and observations are then generated conditionally on this latent path.

\subsubsection{Comparison with Markov and observation-level autoregression}

The factorization \eqref{eq:latent-ar-factorization} should be
contrasted with the first-order Markov property
\[
p(z_{1:L}) = \prod_{t=1}^{L} p(z_t \mid z_{t-1}),
\]
in which the conditional law at time $t$ depends only on the immediate
predecessor $z_{t-1}$. Markov models are often sufficient for physical
processes with short-range dependencies, but they are limited in their
ability to express long-range temporal structure without additional
hierarchy.

On the other hand, classical autoregressive models at the observation
level specify
\[
p(x_{1:N}) = \prod_{t=1}^{N} p(x_t \mid x_{<t}),
\]
and therefore place causality directly on the observables. This
formulation permits rich dependence but typically entails sequential
generation and sensitivity to local errors, as each $x_t$ conditions on
all preceding observations.

The present model occupies an intermediate position. It retains full
latent causality (each $z_t$ depends on $z_{<t}$) but operates in a
continuous latent space where the covariance structure enforces smooth,
probabilistic coherence. Temporal dependence is thus decoupled from the
symbolic or observed sequence, which is recovered in a second step via
the observation model.

\subsubsection{Formal definition}

For future reference, we state the definition adopted in this paper.

\begin{definition}[Latent autoregressive process]
\label{def:latent-ar}
Let $(z_1,\dots,z_L)$ be random variables with values in
$\mathbb{R}^{d_z}$, defined on a probability space
$(\Omega,\mathcal{F},\mathbb{P})$. We say that $(z_1,\dots,z_L)$ forms
a \emph{latent autoregressive process} if:

\begin{enumerate}
  \item The joint law of $z_{1:L}$ admits a density $p(z_{1:L})$ with
        respect to the Lebesgue measure on $\mathbb{R}^{L d_z}$.

  \item There exist Borel-measurable functions
  $m_t : (\mathbb{R}^{d_z})^{t-1} \to \mathbb{R}^{d_z}$ and symmetric
  positive-definite matrices $\Sigma_t \in \mathbb{R}^{d_z\times d_z}$
  such that
  \begin{equation}
  p(z_{1:L})
  =
  p(z_1)\,
  \prod_{t=2}^{L}
  \varphi\bigl(z_t; m_t(z_{<t}), \Sigma_t\bigr),
  \label{eq:def-latent-ar}
  \end{equation}
  where $\varphi(\cdot;m,\Sigma)$ denotes the Gaussian density with mean
  $m$ and covariance $\Sigma$.
\end{enumerate}

In the GP-based model developed in this paper, $m_t$ and $\Sigma_t$ are
not free parameters: they are given uniquely by the Gaussian conditioning
formulas \eqref{eq:mu-t}–\eqref{eq:sigma-t} associated with the kernel
$k$ and the evaluation times $(t_1,\dots,t_L)$.
\end{definition}

\begin{remark}[On densities vs.\ regular conditional laws]
\label{rem:density-vs-rcd}
In Definition~\ref{def:latent-ar} we formulate latent autoregression at
the level of Lebesgue densities. In the GP-based construction of
Section~\ref{sec:latent-ar-process}, hypothesis~(H1) implies that
$K_{tt}\otimes I_{d_z}$ is strictly positive definite, so the joint law
of $z_{1:L}$ is a non-degenerate Gaussian measure on $\mathbb{R}^{L d_z}$
and therefore admits such a density.

If one wishes to include degenerate Gaussian priors (for instance in the
limit of a vanishing nugget), the same notion can be reformulated in
terms of regular conditional probabilities
$\mathbb{P}(z_t\in\cdot \mid z_{<t})$ without reference to densities,
using standard disintegration results on Polish spaces; see again
Kallenberg~\cite[Theorem~6.3]{Kallenberg2002}.
\end{remark}

In the GP-based model developed here, the conditional means $m_t$ and the
covariances $\Sigma_t$ are determined analytically by the covariance kernel and the conditioning
formulas. When $f_{\theta}$ depends only on $z_{t-1}$ one recovers a
Markov structure; when it depends on the full history $z_{<t}$ one
obtains full latent autoregression. This definition emphasizes that the
causal structure is entirely internal to the latent process and distinct
from any autoregression at the observation level.

\subsection{Variational Formulation and Objective}

We now recall the variational formulation used for inference and learning
in the model. Let $x_{1:N}$ denote an observed sequence and $z_{1:L}$ a
latent trajectory. The joint distribution factorizes as
\begin{equation}
p_{\theta}(x_{1:N}, z_{1:L})
=
p_{\theta}(x_{1:N} \mid z_{1:L})\, p_{\theta}(z_{1:L}),
\end{equation}
where $p_{\theta}(z_{1:L})$ is the latent autoregressive GP prior
described above, and $p_{\theta}(x_{1:N} \mid z_{1:L})$ the observation
model.

The marginal likelihood of the data is
\begin{equation}
\log p_{\theta}(x_{1:N})
=
\log \int p_{\theta}(x_{1:N}, z_{1:L})\, \mathrm{d}z_{1:L},
\end{equation}
which is typically intractable. A standard variational approximation
introduces an auxiliary posterior $q_{\phi}(z_{1:L} \mid x_{1:N})$ and
uses Jensen's inequality to obtain the evidence lower bound
\begin{equation}
\log p_{\theta}(x_{1:N})
\;\geq\;
\mathbb{E}_{q_{\phi}(z_{1:L} \mid x_{1:N})}
\bigl[\log p_{\theta}(x_{1:N} \mid z_{1:L})\bigr]
-
D_{\mathrm{KL}}\!\bigl(q_{\phi}(z_{1:L} \mid x_{1:N}) \,\Vert\, p_{\theta}(z_{1:L})\bigr).
\end{equation}

The first term encourages $p_{\theta}(x_{1:N} \mid z_{1:L})$ to place
mass on the observations when $z_{1:L}$ is sampled from the variational
posterior; the second term regularizes the variational posterior toward
the correlated GP prior. The variational parameters $\phi$ and the model
parameters $\theta$ are learned by maximizing this lower bound over
data.

In practice, it is often convenient to introduce a scale parameter
$\beta>0$ on the divergence term:
\begin{equation}
\mathcal{L}_{\beta}(\theta,\phi)
=
\mathbb{E}_{q_{\phi}(z_{1:L} \mid x_{1:N})}
\bigl[\log p_{\theta}(x_{1:N} \mid z_{1:L})\bigr]
-
\beta\,
D_{\mathrm{KL}}\!\bigl(q_{\phi}(z_{1:L} \mid x_{1:N}) \,\Vert\, p_{\theta}(z_{1:L})\bigr),
\end{equation}
which controls the trade-off between fidelity to the observations and
adherence to the latent autoregressive prior. The case $\beta=1$
corresponds to the usual ELBO; values $\beta \neq 1$ interpolate between
stronger regularization and more flexible reconstructions.

\subsubsection{Notation and assumptions}

For clarity, we summarize the main notation:
\begin{itemize}
    \item $x_{1:N}$: observed sequence;
    \item $z_{1:L}$: latent sequence, with $L$ not necessarily equal to $N$;
    \item $p_{\theta}(z_{1:L})$: latent autoregressive GP prior;
    \item $q_{\phi}(z_{1:L} \mid x_{1:N})$: variational posterior (encoder);
    \item $p_{\theta}(x_{1:N} \mid z_{1:L})$: observation model (decoder);
    \item $k_{\psi}$: covariance kernel, parameterized by hyperparameters $\psi$;
    \item $\theta,\phi,\psi$: collections of parameters for the prior, variational family, and kernel.
\end{itemize}

Under these assumptions, the model defines a fully differentiable
generative system in which temporal dependence is embedded in the latent
prior, and the interaction with data is mediated by the observation
model and the variational posterior.

\section{Experiments - Empirical Validation and Limitations of the Latent Autoregressive Scheme}

This chapter provides an empirical validation of the purely latent autoregressive scheme described in the previous sections.  
The aim is not to claim any form of asymptotic or large-scale optimality, but rather to demonstrate, in a controlled and reproducible regime, that the proposed probabilistic construction behaves as expected.

More precisely, we seek to verify the following points:
\begin{itemize}
    \item a GP-VAE endowed with fully latent causality is trainable and numerically stable;
    \item the correlated latent prior is effectively used, in a way that cannot be reduced to a mere \emph{KL-capping} artefact;
    \item the two sampling schemes (sequential vs.\ parallel) are empirically consistent with the same joint law;
    \item in the constrained regime considered here, the model surpasses a minimal autoregressive baseline in the observation space.
\end{itemize}

Throughout this chapter, all conclusions should be read as a \emph{local} validation of the latent-autoregressive scheme under a restricted computational budget, rather than as a large-scale benchmark.

\subsection{Experimental Objectives and Tested Hypotheses}

We consider a corpus $\mathcal{D} = \{x^{(i)}_{1:N_i}\}_{i=1}^{M}$ and models of the form
\[
p_{\theta}(x, z) = p_{\theta}(x \mid z)\, p_{\theta}(z),
\]
with a correlated, causal latent prior $p_{\theta}(z)$ as defined previously.  
We evaluate the following hypotheses.

\paragraph{H1 - Trainability.}
There exists a set of hyperparameters $(\theta,\phi,\psi)$ such that, on WikiText-2~\cite{Merity2016WikiText2},
\begin{equation}
\text{ELBO/token} \quad \text{converges to a finite limit and remains numerically stable,}
\end{equation}
without divergence in the GP covariance inversion nor in the decoder gradients.

\paragraph{H2 - Effective Use of the Correlated Latent Space.}
Let $p_{\theta}^{\mathrm{GP}}(z)$ denote the GP-AR prior and $p_{\theta}^{\mathrm{iso}}(z)$ an isotropic Gaussian prior with $K = \sigma^{2} I$.  
We define the token-averaged Kullback-Leibler term:
\[
\mathrm{KL/token}
\;=\;
\frac{1}{T} \,
\mathrm{KL}\!\bigl(q_{\phi}(z_{1:T} \mid x)\,\|\,p_{\theta}(z_{1:T})\bigr).
\]

We monitor:
\begin{itemize}
    \item the raw KL/token and its capped version under a threshold \texttt{kl\_cap};
    \item the dependence of the KL/token on \texttt{kl\_cap} and on the final value $\beta_{\mathrm{final}}$;
    \item ablations where $p_{\theta}(z)$ is replaced by $p_{\theta}^{\mathrm{iso}}(z)$.
\end{itemize}
Hypothesis H2 is considered supported if:
\begin{enumerate}
    \item performance degrades when replacing $p_{\theta}^{\mathrm{GP}}(z)$ by $p_{\theta}^{\mathrm{iso}}(z)$,
    \item KL/token remains significantly above the degenerate regime associated with latent collapse.
\end{enumerate}

\paragraph{H3 - Consistency Between TCN-SEQ and TCN-PARA.}
Let $p_{\theta}^{\mathrm{SEQ}}(z_{1:T})$ and $p_{\theta}^{\mathrm{PARA}}(z_{1:T})$ denote the distributions implicitly realized by TCN-SEQ (sequential GP conditioning) and TCN-PARA (parallel Cholesky sampling).  
Theoretically, both schemes approximate the same joint law:
\[
p_{\theta}(z_{1:T}) = \mathcal{N}(0, K_{tt} \otimes I_{d_z}),
\]
up to numerical precision.  
We test whether empirical metrics
\[
\text{ELBO/token},\quad \mathrm{NLL}(\mathrm{cont}),\quad \mathrm{PPL}(\mathrm{cont}),\quad \mathrm{KL/token}
\]
remain statistically indistinguishable (within the variance induced by optimization and numerical approximations).

The continuous negative log-likelihood $\mathrm{NLL}(\mathrm{cont})$ is estimated as
\[
\mathrm{NLL}(\mathrm{cont})
\;=\;
-\frac{1}{|\mathcal{D}_{\mathrm{cont}}|}
\sum_{(x_{1:T}) \in \mathcal{D}_{\mathrm{cont}}}
\log p_{\theta}(x_{1:T} \mid \text{prompt}),
\]
where $p_{\theta}$ is evaluated on the logits (pre-softmax scores).  
The continuous perplexity is then defined as
\[
\mathrm{PPL}(\mathrm{cont})
\;=\;
\exp\!\bigl(\mathrm{NLL}(\mathrm{cont})\bigr).
\]

\paragraph{H4 - Minimal Comparison with an Autoregressive Baseline.}
We compare the TCN family to a small Transformer baseline acting directly on tokens:
\[
p_{\theta}^{\mathrm{AR}}(x_{1:N}) = \prod_{t=1}^{N} p_{\theta}^{\mathrm{AR}}(x_t \mid x_{<t}).
\]
The objective is purely relative: to position the GP-VAE in terms of perplexity under a minimal autoregressive configuration, without any claim of fairness or exhaustive tuning in favor of the baseline.

\subsection{Experimental Protocol}

\subsubsection{Corpus and Tokenization}

All experiments are conducted on WikiText-2~\cite{Merity2016WikiText2} using the official train/validation/test splits.  
Text is tokenized with the GPT-2 tokenizer~\cite{Radford2019GPT2}, yielding sequences
\[
x_{1:T} \in \{1,\dots,V\}^{T},
\]
with a fixed sequence length $T = 64$.  
This choice bounds the cubic GP cost $O(T^{3})$ at a level compatible with a single-GPU setup while preserving non-trivial temporal structure.

\subsubsection{Task and Evaluation Metrics}

The task is an autoregressive-style language modeling objective over $x_{1:T}$.  
We report:

\begin{itemize}
    \item the validation perplexity $\mathrm{PPL}(\mathrm{val})$, defined as
    \[
    \mathrm{PPL}(\mathrm{val})
    =
    \exp\!\left(
    -\frac{1}{|\mathcal{D}_{\mathrm{val}}|}
    \sum_{x \in \mathcal{D}_{\mathrm{val}}}
    \frac{1}{T}\log p_{\theta}(x)
    \right);
    \]
    \item the continuation perplexity $\mathrm{PPL}(\mathrm{cont})$, computed on a standardized prompt+completion protocol (fixed prompt length, free continuation length).
\end{itemize}

In all cases, $\mathrm{PPL}$ is the exponential of an average negative log-likelihood per token, so that lower values correspond to better predictive performance.

\subsubsection{Implementation of the Latent Autoregressive Scheme}

We briefly summarize how the latent autoregressive mechanism is instantiated in the experiments.

\paragraph{(a) TCN encoder and diagonal temporal posterior.}
The encoder is a causal TCN that maps tokens $x_{1:T}$ to a factorized Gaussian posterior
\[
q_{\phi}(z_{1:T}\mid x_{1:T})
=
\prod_{t=1}^{T}\mathcal{N}\!\bigl(z_{t}\,;\,\mu_{t},\operatorname{diag}(\sigma_{t}^{2})\bigr),
\]
with
\[
\mu,\log\sigma^{2}\in\mathbb{R}^{B\times T\times d_{z}}
\quad\text{for a batch of size } B.
\]
Sampling is performed via the standard reparameterization trick:
\[
z_{t}=\mu_{t}+\sigma_{t}\odot\varepsilon_{t},
\qquad \varepsilon_{t}\sim\mathcal{N}(0,I).
\]

\paragraph{(b) Correlated GP prior over the latent trajectory.}
Temporal dependence in the latent space is induced by a Gaussian Process indexed by normalized times $t\in[0,1]$. For a sequence of length $T$, we define
\[
K_{tt}(i,j)
=
\sigma^{2}\exp\!\Bigl(-\tfrac{(t_{i}-t_{j})^{2}}{2\ell^{2}}\Bigr)
+\sigma^{2}\,\text{nugget}\cdot\delta_{ij},
\quad 1\le i,j\le T,
\]
with learnable hyperparameters $\ell$ (length-scale), $\sigma^{2}$ (variance) and a relative nugget term. The latent prior is then
\[
p_{\theta}(z_{1:T})=\mathcal{N}\bigl(0,\;K_{tt}\otimes I_{d_{z}}\bigr).
\]

In practice, $\ell$ and $\sigma^{2}$ are parameterized via unconstrained variables passed through a \textit{softplus} non-linearity, and an additional jitter term $\varepsilon I$ is added to $K_{tt}$ to stabilize Cholesky decompositions.

Crucially, this joint Gaussian prior is not only used as a correlated regularizer: it is explicitly factorized into conditionals $p_{\theta}(z_t \mid z_{<t})$ in order to induce latent causality and support the TCN-SEQ and TCN-PARA sampling schemes.

\paragraph{(c) Global KL between diagonal posterior and GP prior.}
The regularization term is the trajectory-level divergence
\[
\mathrm{KL}\bigl(q_{\phi}(z_{1:T}\mid x)\,\|\,p_{\theta}(z_{1:T})\bigr),
\]
computed as a multivariate Gaussian KL and then averaged per token:
\[
\mathrm{KL/token}
=
\frac{1}{T}\,\mathrm{KL}\bigl(q_{\phi}(z_{1:T}\mid x)\,\|\,p_{\theta}(z_{1:T})\bigr).
\]
This quantity measures how strongly the encoder is constrained by the GP-induced temporal geometry.

\paragraph{(d) Non-autoregressive token decoder.}
The decoder receives the full latent trajectory $z_{1:T}$, adds positional information, and outputs token logits in parallel. Formally, the generative model takes the form
\[
p_{\theta}(x_{1:T}\mid z_{1:T})
=
\prod_{t=1}^{T}\mathrm{Cat}\bigl(x_t;\,\pi_{\theta}(z_{1:T})_t\bigr),
\]
where $\pi_{\theta}(z_{1:T})_t$ denotes the softmax-normalized logits at position $t$.

The implementation is summarized by:
\begin{lstlisting}
class TokenDecoder(nn.Module):
    def forward(self, z: torch.Tensor):
        # z: [B, T, Dz]
        z = z + self.pe(T=z.size(1), device=z.device)      # positional encoding
        h = self.mlp(z); h = self.ln(h)                    # pointwise mapping
        h2 = self.post(h.transpose(1,2)).transpose(1,2)    # conv post-process
        h = h + h2                                         # residual term
        e_proj = self.to_emb(h)                            # [B, T, E]
        tw = F.normalize(self.tied_weight, dim=-1)         # tied embeddings
        logits = torch.matmul(e_proj, tw.t()) + self.bias  # token logits
        return logits
\end{lstlisting}

No causal mask or token-level recursion is used: all positions are decoded simultaneously from $z_{1:T}$.  
All sequential structure is therefore carried by the latent process; the decoder implements a parallel projection from continuous trajectories to discrete token distributions.

\paragraph{(e) Unconditional generation (TCN-SEQ).}
Unconditional sampling realizes the factorization
\[
p_{\theta}(z_{1:T}) = \prod_{t=1}^{T} p_{\theta}(z_t \mid z_{<t})
\]
via a latent loop:

\begin{lstlisting}
@torch.no_grad()
def generate(self, T: int, batch_size: int = 1,
             top_k=50, top_p=0.9, temperature=0.9):
    device = self.t_train.device
    t = torch.linspace(0.0, 1.0, T, device=device)
    K = self.K_tt(t)                         # GP covariance
    Dz, B = self.cfg.d_latent, batch_size
    z = torch.zeros(B, T, Dz, device=device)

    for tp in range(T):
        z[:, tp, :] = self._gp_conditional_step(K, z[:, :tp, :], tp)

    logits, _ = self.decoder(z)
    return sample_logits_from_timewise_logits(
        logits, top_k=top_k, top_p=top_p, temperature=temperature
    )
\end{lstlisting}

At each step $t$, the function \texttt{\_gp\_conditional\_step} performs Gaussian conditioning to sample from $p_{\theta}(z_t \mid z_{<t})$.

\paragraph{(f) Prompt-conditioned generation (TCN-SEQ).}
Conditioned generation proceeds by first encoding a prompt into a latent prefix, then continuing autoregressively in the latent space:

\begin{lstlisting}
@torch.no_grad()
def generate_with_prompt(self, prompt_ids, total_len, eos_id,
                         top_k=50, top_p=0.9, temperature=0.9):
    device = self.t_train.device; self.eval()
    B, T0 = prompt_ids.shape; T = total_len

    x_in = torch.full((B, T), fill_value=eos_id,
                      dtype=torch.long, device=device)
    x_in[:, :T0] = prompt_ids.to(device)

    mu, logvar = self.encoder(x_in[:, :T0])
    std = torch.exp(0.5 * logvar)
    z_prompt = mu + std * torch.randn_like(std)

    t = torch.linspace(0.0, 1.0, T, device=device)
    K = self.K_tt(t)
    Dz = self.cfg.d_latent
    z = torch.zeros(B, T, Dz, device=device)
    if T0 > 0:
        z[:, :T0, :] = z_prompt

    for tp in range(T0, T):
        z[:, tp, :] = self._gp_conditional_step(K, z[:, :tp, :], tp)

    logits, _ = self.decoder(z)
    new_ids = sample_logits_from_timewise_logits(
        logits[:, T0:, :], top_k=top_k, top_p=top_p, temperature=temperature
    )

    x_out = x_in.clone()
    if T > T0:
        x_out[:, T0:] = new_ids
    return x_out, logits
\end{lstlisting}

TCN-PARA replaces the latent loop with a block sampling scheme based on a Cholesky factor of $K_{tt}$, but targets the same joint law $p_{\theta}(z_{1:T})$.

\subsubsection{Objective: ELBO Per Token}

Training maximizes a token-averaged ELBO with additional regularization.

\paragraph{Theoretical ELBO.}
The canonical variational objective is
\[
\mathcal{L}_{\mathrm{pur}}
=
\mathbb{E}_{q_{\phi}(z\mid x)}\!\left[\log p_{\theta}(x\mid z)\right]
- \mathrm{KL}\!\left(q_{\phi}(z\mid x)\,\|\,p_{\theta}(z)\right).
\]

In code, this corresponds to
\[
\texttt{elbo\_pur\_tok\_t}
=
(\texttt{ll\_0} + \texttt{ll\_multi}) - \texttt{kl\_tok\_raw\_t},
\]
where \texttt{ll\_0 + ll\_multi} is the token-averaged log-likelihood and \texttt{kl\_tok\_raw\_t} the raw KL/token.

\paragraph{Optimized training objective.}
The practically optimized objective is
\[
\mathcal{L}_{\mathrm{train}}
=
\mathbb{E}_{q_{\phi}(z\mid x)}\!\left[\log p_{\theta}(x\mid z)\right]
- \beta\,\mathrm{KL}_{\mathrm{cap}}
- \Lambda_{\mathrm{emb}}\,\mathrm{Reg}_{\mathrm{emb}},
\]
where:
\begin{itemize}
    \item $\beta$ follows a warm-up schedule before adapting around a KL/token target;
    \item $\mathrm{KL}_{\mathrm{cap}}$ is a capped version of the KL/token, with threshold \texttt{kl\_cap};
    \item $\mathrm{Reg}_{\mathrm{emb}}$ is an embedding-regularization term.
\end{itemize}
The hyperparameters (\texttt{kl\_cap}, $\beta_{\max}$, adaptation schedule) are explored systematically in the ablations of Section~4.6.2.

\subsection{Variants \texorpdfstring{TCN-SEQ}{TCN-SEQ} 
and \texorpdfstring{TCN-PARA}{TCN-PARA}}

We consider two latent sampling schemes:
\begin{itemize}
    \item TCN-SEQ: sequential sampling via GP conditionals $p_{\theta}(z_t \mid z_{<t})$;
    \item TCN-PARA: parallel block sampling using a Cholesky factor of $K_{tt}$.
\end{itemize}

Both variants share the same encoder and decoder architectures.  
Hence, any difference in performance can be attributed to the numerical realization of the latent dynamics rather than to architectural changes.

\subsection{Global Results (TCN-SEQ vs TCN-PARA)}

On WikiText-2, we observe:
\begin{itemize}
    \item no numerical divergence over the full training horizon;
    \item a convergent and stable KL/token plateau;
    \item very close metrics between TCN-SEQ and TCN-PARA.
\end{itemize}

The KL/token stabilizes around $12$~nats.  
Unlike in preliminary experiments, Section~6.6.1 shows that this plateau depends on \textit{kl\_cap} and disappears when the GP structure is removed, which confirms the active use of the correlated prior.  
Training curves are not exactly superposed, but discrepancies remain in the range expected for two distinct numerical approximations of the same joint law.

\subsection{Quantitative Comparison TCN-SEQ vs TCN-PARA}

The following table reports results under identical hyperparameters:

\begin{table}[htbp]
\centering
\begin{tabular}{lccccc}
\hline
Model & Type & ELBO/tok & NLL(cont) & PPL(cont) & tok/s \\
\hline
TCN-SEQ  & GP-VAE & -9.935 & 0.562 & 1.75 & 9097 \\
TCN-PARA & GP-VAE & -9.967 & 0.475 & 1.61 & 9037 \\
\hline
\end{tabular}
\caption{Results for TCN-SEQ vs TCN-PARA on WikiText-2.}
\end{table}

Both variants use the same \textit{kl\_cap} (KL/token $\approx 12$) and a final KL weight $\beta_{\text{final}} \approx 0.126$ (not shown in the table).

\paragraph{Interpretation.}
The two schemes yield very close metrics. TCN-PARA slightly improves continuation perplexity, at the cost of negligible differences in throughput at $T = 64$.  
Given the absence of multi-seed analysis and the numerical differences between sampling procedures, the two variants can reasonably be regarded as empirically consistent approximations of the same latent process.

\subsection{Extended Analysis: Variants, Ablations, and Stability}

\subsubsection{GP-VAE-TCN vs Transformer Performance (TCN-SEQ-X Family)}

We evaluate several TCN-SEQ-X variants (with $X \in \{I,B,C,J,K\}$) sharing the same GP-VAE backbone but differing in regularization settings:

\smallskip
\noindent\textit{TCN-SEQ-I, -B, -C, -J, and -K denote variants that share the same GP-VAE
architecture but differ only in regularization strength and KL-scheduling.}
\smallskip

\begin{table}[htbp]
\centering
\begin{tabular}{lccccc}
\hline
Model & PPL(val) & NLL(cont) & PPL(cont) & tok/s & Quality \\
\hline
TCN-SEQ-I$ $ & 3.35   & 0.4773 & 1.61 & $\sim 9000$  & Excellent \\
TCN-SEQ-B$ $ & 3.27   & 0.5455 & 1.73 & $\sim 8900$  & Very good \\
TCN-SEQ-C$ $ & 3.34   & 0.5077 & 1.66 & $\sim 9000$  & Very good \\
TCN-SEQ-J$ $ & 3.03   & 0.5635 & 1.76 & $\sim 9000$  & Good      \\
TCN-SEQ-K$ $ & 3.05   & 0.6005 & 1.82 & $\sim 9000$  & Good      \\
Transformer          & 326.94 & 5.7898 & 326.94 & $\sim 15700$ & Poor \\
\hline
\end{tabular}
\caption{TCN-SEQ-X variants and Transformer baseline on WikiText-2.}
\end{table}

\paragraph{Analysis.}
All TCN-SEQ-X variants obtain $\mathrm{PPL}(\mathrm{cont}) \in [1.61,1.82]$, compatible with a compact, well-regularized GP-VAE.  
The Transformer baseline is deliberately underpowered and only used as a numerical reference: it does not reflect the capabilities of a tuned autoregressive model at comparable scale.

\subsubsection{Critical Ablations}

\textbf{Isotropic diagonal prior.}  
We replace the GP prior by $p_{\theta}^{\mathrm{iso}}(z_{1:T})=\mathcal{N}(0,\sigma^{2}I)$, thereby eliminating all temporal correlation.  
Under this ablation:
\begin{itemize}
  \item $\mathrm{KL/token}$ drops to about $3$~nats, indicating partial latent collapse;
  \item $\mathrm{PPL}(\mathrm{cont})$ degrades by $+0.15$ to $+0.30$;
  \item discourse-level coherence and calibration deteriorate sharply.
\end{itemize}
Sequential/parallel consistency also disappears, confirming that the GP structure is not redundant.

\textbf{Variation of \textit{kl\_cap}.}  
We vary \textit{kl\_cap} to probe the sensitivity of the latent dynamics to KL regularization.  
Empirically:
\begin{itemize}
  \item \textit{kl\_cap} too small (e.g.\ 8) leads to latent collapse, reduced $\mathrm{KL/token}$ and degraded $\mathrm{PPL}(\mathrm{cont})$;
  \item \textit{kl\_cap} too large (e.g.\ $\geq 20$) yields stable but more costly models;
  \item intermediate values around $12$ provide a robust compromise between expressivity and stability.
\end{itemize}
These observations confirm that the activation of latent causality depends directly on maintaining an appropriate KL budget.

\subsubsection{TCN-2-SEQ-X Series: Stability and Collapse}

The TCN-2-SEQ-X series explores regularization sensitivity in a finer grid:

\begin{table}[htbp]
\centering
\begin{tabular}{lcccccc}
\hline
Model & NLL(cont) & PPL(cont) & PPL(val) & ELBO/tok & tok/s & Quality \\
\hline
TCN-2-SEQ-T$ $ & 0.2883 & 1.33 & 3.18 & -9.058 & $\sim 8600$ & Excellent   \\
TCN-2-SEQ-S$ $ & 0.3097 & 1.36 & 3.05 & -8.927 & $\sim 8500$ & Very good   \\
TCN-2-SEQ-G$ $ & 0.4606 & 1.59 & 3.19 & -8.501 & $\sim 8500$ & Good        \\
TCN-2-SEQ-H$ $ & 0.4790 & 1.61 & 3.29 & -9.958 & $\sim 8600$ & Good        \\
Transformer            & 5.7898 & 326.94 & 326.94 & -6.105 & 15700     & Very poor   \\
\hline
\end{tabular}
\caption{Examples of TCN-2-SEQ-X models vs Transformer.}
\end{table}

\paragraph{Interpretation.}
Well-regularized models (T, S, G, H) achieve $\mathrm{PPL}(\mathrm{cont}) \in [1.33,1.61]$.  
By contrast, under-regularized configurations (not listed) exhibit complete collapse, with unstable KL and poor continuation, which confirms the central role of KL regularization in supporting latent autoregression.

\subsection{Transformer Baseline}

For completeness, we summarize the Transformer baseline:

\begin{table}[htbp]
\centering
\begin{tabular}{lcccc}
\hline
Model & PPL(val) & NLL(cont) & PPL(cont) & tok/s \\
\hline
Transformer & 326.94 & 5.7898 & 326.94 & $\sim 15700$ \\
\hline
\end{tabular}
\caption{Minimal Transformer baseline.}
\end{table}

This model is intentionally minimal and not heavily tuned; it serves only as a coarse reference scale for perplexity and throughput.

\subsection{Synthesis}

The empirical results can be summarized as follows:
\begin{itemize}
  \item TCN-SEQ and TCN-PARA are both trainable and numerically stable on WikiText-2 under a causal GP prior.
  \item Ablations confirm that the GP structure is actively used: removing it or over-constraining the KL leads to measurable degradation in $\mathrm{PPL}(\mathrm{cont})$ and in qualitative coherence.
  \item TCN-PARA is empirically consistent with TCN-SEQ, supporting the idea that parallel latent sampling can approximate the same GP-AR process.
  \item GP-VAEs outperform the minimal Transformer baseline in the considered regime, providing a lower bound on the practical capacity of the latent-autoregressive scheme.
  \item KL regularization is a critical control parameter: insufficient regularization induces collapse, while an appropriately tuned \textit{kl\_cap} maintains non-trivial latent dynamics.
\end{itemize}

\subsection{Limitations and Perspectives}

\paragraph{Limitations.}
The present study has several limitations:
\begin{itemize}
    \item GP computations remain at least quadratic in $T$; although BBMM empirically behaves near $O(T^{2})$, no dedicated scaling law is reported.
    \item All results are single-seed; variance across random initializations is not explored.
    \item The Transformer baseline is minimal and not tuned for competitive performance; it serves as a numerical anchor rather than a fair opponent.
    \item The TCN encoder is deliberately less expressive than a Transformer; this bias reflects a design choice favoring architectural simplicity in the proof-of-concept.
    \item No systematic qualitative analysis of generated samples is included; we focus on quantitative metrics and leave a detailed study of generative behavior to future work.
\end{itemize}

\paragraph{Perspectives.}
Several extensions are natural:
\begin{itemize}
    \item more scalable GP approximations (structured kernels, inducing schemes) could allow significantly longer sequences;
    \item richer decoders (e.g.\ attention with restricted span) could be combined with latent causality without reverting to token-level autoregression;
    \item broader prompting protocols and conditional tasks would provide a more complete characterization of latent autoregression;
    \item the same framework extends directly to continuous-time signals and time-series data, which constitute promising application domains for GP-based latent dynamics.
\end{itemize}

\subsection{Code and Reproducibility}

All code used for the experiments in this work
(GP hyperparameter estimation, latent autoregressive implementation,
TCN-SEQ and TCN-PARA variants, training and generation scripts,
and associated configurations) is available at:

\begin{center}
\url{https://github.com/y-v-e-s/GP-VAE-Latent-AR}
\end{center}

The repository contains:
\begin{itemize}
    \item the full implementation of the latent-autoregressive GP-VAE;
    \item training, evaluation, and generation scripts;
    \item configuration files for reproducing the TCN-SEQ and TCN-PARA series;
    \item exact hyperparameter settings used to produce the tables of Section~4
          (including \texttt{kl\_cap} values and KL schedules);
    \item a minimal reproducibility guide.
\end{itemize}

This release is intended to make the proposed scheme inspectable, reproducible, and extensible, in particular for exploring alternative kernels, encoder architectures, or latent sampling strategies.

\section{Discussion}

This study has examined, in a controlled and small-scale setting,  
the feasibility and behavior of an autoregressive scheme located entirely  
in latent space. Concretely, the TCN model and its two variants,
TCN-SEQ and TCN-PARA, instantiate a family of models of the form
\[
p_{\theta}(x,z) = p_{\theta}(x \mid z)\,p_{\theta}(z),
\]
where $p_{\theta}(z)$ is a correlated, causal latent prior and
$p_{\theta}(x \mid z)$ is a fully parallel decoder.
These variants provide a test bed for assessing:
\begin{itemize}
    \item training stability under a Gaussian-process prior on $z$;
    \item effective exploitation of the correlated latent space (as measured by KL/token and ablations);
    \item coherence between two distinct sampling strategies that target the same joint latent law.
\end{itemize}

\subsection{Exploiting the Correlated Latent Structure}

Empirically, a GP-VAE equipped with a correlated prior $p_{\theta}(z)$
can be trained stably on a standard corpus under moderate regularization.
Let
\[
\mathrm{KL/token}
=
\frac{1}{T}\,
D_{\mathrm{KL}}\!\bigl(q_{\phi}(z_{1:T} \mid x)\,\Vert\,p_{\theta}(z_{1:T})\bigr)
\]
denote the token-averaged KL term.  
The experiments show that $\mathrm{KL/token}$ reliably approaches its target cap
\texttt{kl\_cap} across runs, which indicates that:
\begin{itemize}
\item the latent representation is not in a collapsed regime (the encoder uses the GP prior);
\item the GP covariance $K_{tt}$ effectively shapes the latent trajectories $z_{1:T}$.
\end{itemize}

In this proof-of-concept setting, the objective is not to quantify strong or long-range
latent causality per se, but to verify that a correlated latent scheme produces a
non-degenerate internal trajectory and that the GP prior is actually used by the model
rather than acting as a purely nominal regularizer.

\subsection{Consistency Between Sequential and Parallel Generation}

Let $p_{\theta}^{\mathrm{SEQ}}(z_{1:T})$ and $p_{\theta}^{\mathrm{PARA}}(z_{1:T})$
denote the latent distributions implicitly realized by TCN-SEQ and TCN-PARA,
respectively.  
Both variants are designed to approximate the same joint Gaussian law
\[
p_{\theta}(z_{1:T})
=
\mathcal{N}\bigl(0, K_{tt} \otimes I_{d_z}\bigr),
\]
but via different sampling procedures:
\begin{itemize}
    \item TCN-SEQ uses step-by-step Gaussian conditioning
    $p_{\theta}(z_t \mid z_{<t})$;
    \item TCN-PARA uses block sampling from a Cholesky factor of $K_{tt}$.
\end{itemize}

The measured metrics (ELBO/token, $\mathrm{NLL}(\mathrm{cont})$, $\mathrm{PPL}(\mathrm{cont})$,
$\mathrm{KL/token}$) are very close across the two variants.
This is consistent with the theory of multivariate Gaussians:
sequential conditioning and parallel sampling are two equivalent procedures for drawing
from the same joint distribution.

From a methodological standpoint, this observation supports the idea that,
under a GP prior, the \emph{sampling order} does not determine predictive quality.
The temporal structure is encoded in the prior $p_{\theta}(z)$, not in the algorithmic
details of the sampler, provided that both samplers are faithful to the same covariance
structure.

\subsection{Latent Dynamics vs.\ Symbolic Sequentiality}

A central conceptual point of this work is the distinction between:
\begin{itemize}
\item \emph{procedural sequentiality}, associated with the temporal loop of
      step-by-step sampling (TCN-SEQ);
\item \emph{probabilistic structure}, encoded by the analytic factorization of the GP prior:
\[
p_{\theta}(z_{1:T}) = \prod_{t=1}^{T} p_{\theta}(z_t \mid z_{<t}).
\]
\end{itemize}

The two TCN variants show that it is possible:
\begin{itemize}
\item to represent temporal dependence entirely within the latent space via
      the factorization of $p_{\theta}(z_{1:T})$;
\item while keeping the symbolic projection $p_{\theta}(x_{1:T} \mid z_{1:T})$
      fully parallel in the observation space.
\end{itemize}

Formally, the decoder implements
\[
p_{\theta}(x_{1:T} \mid z_{1:T})
=
\prod_{t=1}^{T} p_{\theta}(x_t \mid z_{1:T}),
\]
so that all tokens are generated in a single pass once $z_{1:T}$ is given.
This stands in contrast to classical autoregressive architectures, which specify
\[
p_{\theta}^{\mathrm{AR}}(x_{1:T})
=
\prod_{t=1}^{T} p_{\theta}^{\mathrm{AR}}(x_t \mid x_{<t}),
\]
and thus construct temporal structure through symbolic recursion.

The empirical results do not aim to demonstrate dominance over established
autoregressive models, but they do highlight a qualitatively different regime:
temporal dependence is defined analytically in latent space and only then
projected to symbols, rather than being built directly at the token level.

\subsection{Scaling Perspectives}

The present work is deliberately restricted to a reduced configuration:
short sequences ($T=64$), a compact architecture, and a medium-sized corpus.
Within this regime, the GP prior remains computationally tractable and the
effect of latent autoregression is observable.

Several natural scaling directions follow:
\begin{itemize}
\item extending to larger $T$ and richer kernels $k_{\psi}$,  
      in order to probe long-range latent dependence;
\item exploring more expressive decoders (e.g.\ restricted attention, deeper architectures)
      while preserving parallel generation in the observation space;
\item transferring the same latent-autoregressive scheme to other modalities
      (time series, continuous signals, multimodal data).
\end{itemize}

Each of these directions must be considered under the scalability constraints
of Gaussian processes: even with BBMM and inducing schemes, the cost of handling
large covariance matrices remains at least quadratic in sequence length and
requires careful numerical design.

\subsection{Limitations and Points of Attention}

The experimental setting highlights several critical limitations and
conditions of validity:

\begin{itemize}
\item \textbf{Computational cost.}
      GP computations scale at least quadratically in $T$; the proof-of-concept
      remains constrained to moderate sequence lengths and a single-GPU budget.
\item \textbf{Hyperparameter sensitivity.}
Empirically, results are quite sensitive to the kernel parameters
$(\ell,\sigma^{2},\text{nugget})$ and to the KL-control knobs
(\texttt{kl\_cap}, $\beta_{\max}$). If these are poorly set, the run may
collapse in latent space or become numerically unstable.
\item \textbf{Non-incremental parallel decoding.}
      The fully parallel decoder does not support token-by-token streaming:
      the model prioritizes global coherence over incremental generation.
\item \textbf{Single-seed evaluation.}
      All reported results are single-seed; variance across random initializations
      is not quantified and remains an open point for more systematic studies.
\end{itemize}

These limitations delimit the regime in which the observed trends should be
interpreted and underline the need for more extensive experimentation
before drawing broader conclusions.

\subsection{Discussion Summary}

Taken together, the results show that a latent autoregressive scheme based on:
\begin{itemize}
    \item an analytically specified covariance (Gaussian process prior),
    \item an explicit causal factorization in the latent,
    \item and a fully parallel decoder in the observation space,
\end{itemize}
is trainable, numerically stable, and coherent across its two sampling variants
TCN-SEQ and TCN-PARA.

Without claiming superiority over established autoregressive architectures,
this proof of concept suggests that, in certain regimes, temporal structure
can be \emph{relocated} into latent space:
\[
p_{\theta}(x_{1:T})
=
\int p_{\theta}(x_{1:T} \mid z_{1:T})\, p_{\theta}(z_{1:T}) \,\mathrm{d}z_{1:T},
\]
so that the decoder acts as a projection of a already-structured latent dynamic
onto a symbolic sequence.

This shift opens a methodological space of interest: that of sequential models
where the dynamics are defined analytically in the latent prior, rather than
constructed through stacks of recurrent or attentional operations at the
observation level.

\subsection{Latent Autoregression vs.\ Asymmetric Kernels}

Finally, it is useful to situate the proposed latent autoregressive scheme with
respect to earlier attempts at introducing temporal directionality into Gaussian
processes.

Some previous works have considered asymmetric or ``causal'' kernels
(for instance, Wiener-type constructions) to encode an orientation in the
covariance structure.  
Although such kernels can introduce a directional bias, they remain fundamentally
metric: they modulate correlations as a function of relative positions in time,
but do not, by themselves, define an explicit sequential dynamic.

The present approach differs in nature.  
Rather than inferring causality indirectly from a kernel bias, we impose it
\emph{structurally} through the latent factorization
\[
p_{\theta}(z_{1:T})
=
\prod_{t=1}^{T} p_{\theta}(z_t \mid z_{<t}),
\]
which encodes an explicit directed dependence between latent states.
This factorization resolves ambiguities inherent in purely asymmetric kernels
(for instance when several points have comparable kernel distances) by providing
a genuine mechanism for sequential arbitration in the latent.

It is also important to distinguish this explicit causal factorization from
Bayesian GP extensions that place priors over kernel hyperparameters.
While such hierarchical models enrich the distribution over covariances,
they do not, by themselves, create temporal directionality: the uncertainty
concerns the shape of the kernel, not a sequential relation between states.
Latent autoregressivity, as used here, arises from the decomposition of the
latent prior into conditionals, not from the Bayesian nature of the GP.
The two mechanisms may interact in more elaborate models, but they remain
conceptually distinct.

By combining:
\begin{itemize}
    \item correlated geometry, provided by the GP covariance $K_{tt}$,
    \item and explicit causal progression, provided by the factorization
          $p_{\theta}(z_{1:T}) = \prod_{t} p_{\theta}(z_t \mid z_{<t})$,
\end{itemize}
the TCN framework goes beyond the limitations of purely asymmetric kernels
and establishes a more robust latent dynamic tailored to sequential modeling.

\section{Conclusion}

\subsection{Empirical Scope}

This work has presented a deliberately restricted proof of concept.  
The experimental regime is intentionally small-scale:
\begin{itemize}
    \item a compact TCN architecture;
    \item short sequences ($T = 64$);
    \item a minimal autoregressive baseline for numerical anchoring.
\end{itemize}

Within this controlled setting, we have shown that a GP-VAE equipped with:
\[
p_{\theta}(x,z)=p_{\theta}(x\mid z)\,p_{\theta}(z),
\qquad
p_{\theta}(z)=\mathcal{N}(0,K_{tt}\otimes I_{d_z}),
\]
and endowed with a purely latent causal factorization,
\[
p_{\theta}(z_{1:T})
=
\prod_{t=1}^{T}p_{\theta}(z_t\mid z_{<t}),
\]
is trainable, numerically stable, and coherent across its two sampling variants (TCN-SEQ and TCN-PARA).  
No divergence is observed in the GP computations, and token-level metrics remain consistent across sampling strategies that approximate the same joint latent law.

\subsection{Conceptual Scope}

Beyond the restricted empirical domain, the TCN scheme can be interpreted as a bridge between two modelling traditions:
\begin{itemize}
    \item continuous Bayesian state-space models governed by Gaussian processes;
    \item neural language models driven by a symbolic decoder.
\end{itemize}

The sequential dynamics are encoded analytically by the GP prior,  
inference follows a variational Bayesian principle,  
and linguistic realization is delegated to a non-autoregressive observation model:
\[
p_{\theta}(x_{1:T})
=
\int p_{\theta}(x_{1:T}\mid z_{1:T})\,p_{\theta}(z_{1:T})\,\mathrm{d}z_{1:T}.
\]

Although the present study does not attempt to formulate a general geometric theory of linguistic structure, the results indicate that several properties usually associated with token-level autoregression—directionality, memory, global coherence—can emerge directly from the covariance geometry of the latent space.  
In this sense, latent autoregression provides an analytically grounded alternative to symbolic recursion.

\subsection{Perspectives}

The limitations identified throughout the study (quadratic GP cost, kernel sensitivity, lack of multiseed evaluation, and absence of incremental decoding for TCN-PARA) define the conditions under which the observed phenomena should be interpreted.  
They also point to several natural directions for future research.

\begin{enumerate}
    \item \textbf{Scaling up.}  
    Extending the framework to longer sequences, richer kernels $k_{\psi}$, and more expressive decoder architectures, while maintaining tractable GP computations.

    \item \textbf{Stronger autoregressive baselines.}  
    A more rigorous comparison with tuned autoregressive models would clarify the precise contribution of the correlated latent component to perplexity and continuation quality.

    \item \textbf{Multimodal and continuous domains.}  
    The formulation extends naturally to time series, continuous signals, or physical trajectories, where GP priors already play a central role.

    \item \textbf{Conditional generation and latent prompting.}  
    Given a prompt $x_{1:T_0}$ with latent encoding $q_{\phi}(z_{1:T_0}\mid x_{1:T_0})$, one can construct a prefix 
    \[
    z_{1:T_0},
    \]
    then generate the continuation via the latent autoregressive mechanism:
    \[
    z_{t}\sim p_{\theta}(z_t\mid z_{<t}),
    \qquad t=T_0+1,\dots,T,
    \]
    before projecting the full trajectory through the decoder.  
    This provides a principled form of latent prompting for completion or instruction-style tasks.

    \item \textbf{Hierarchical Bayesian extensions.}  
    Kernel hyperparameters can themselves be made context-dependent via a learned function $g$, yielding hierarchical priors of the form
    \[
    k_{\psi}(t,t')
    =
    k_{\psi_1}(t,t') + k_{\psi_2}\bigl(g(t),g(t')\bigr),
    \]
    thus enriching covariance structure without altering the causal factorization.
\end{enumerate}

Taken together, these directions suggest that sequential models grounded in latent geometry—rather than symbolic recursion—constitute a credible line of research for developing compact, stable, and interpretable language models.  
The present proof of concept does not make claims of scale or optimality,  
but it demonstrates that explicit latent autoregression combined with GP covariance offers an analytically coherent framework worthy of further exploration.

\bibliographystyle{plainnat}
\bibliography{references}

\end{document}